\documentclass[11pt]{article}

\usepackage[utf8]{inputenc}
\usepackage[T1]{fontenc}
\usepackage{lmodern}

\usepackage{amsmath}
\usepackage{amssymb}
\usepackage{amsfonts}
\usepackage{mathtools}
\usepackage{amsthm}

\usepackage{graphicx}
\usepackage{booktabs}
\usepackage{array}
\usepackage{adjustbox}
\usepackage{enumitem}
\usepackage{microtype}
\usepackage{natbib}
\usepackage{xcolor}
\usepackage{url}
\usepackage{hyperref}
\graphicspath{{figures/}}
\newcommand{\arxivtablesetup}{\footnotesize\setlength{\tabcolsep}{4pt}\renewcommand{\arraystretch}{1.08}}
\newcolumntype{P}[1]{>{\raggedright\arraybackslash}p{#1}}

\newtheorem{theorem}{Theorem}

\newtheorem{corollary}[theorem]{Corollary}

\newcommand{\R}{\mathbb{R}}
\newcommand{\Pbb}{\mathbb{P}}
\newcommand{\Ebb}{\mathbb{E}}
\newcommand{\TV}{\operatorname{TV}}
\newcommand{\tr}{\operatorname{tr}}
\newcommand{\diag}{\operatorname{diag}}
\newcommand{\unif}{\operatorname{unif}}
\newcommand{\Pois}{\operatorname{Poisson}}
\newcommand{\rank}{\operatorname{rank}}

\newcommand{\norm}[1]{\lVert #1\rVert}
\newcommand{\ip}[2]{\langle #1,#2\rangle}

\hypersetup{
    colorlinks=true,
    linkcolor=blue,
    citecolor=blue,
    urlcolor=blue,
    pdftitle={Finite-Sample Spectral Calibration for Rank Inference in Foundation-Model Adapters},
    pdfauthor={Mohammed AHNOUCH and Lotfi Elaachak}
}

\title{Spectral Rank Certification for Foundation Model Adapters}

\author{
    Mohammed AHNOUCH
    \\
    Université Paris 1
    \\
    Paris, France
    \and
    Lotfi Elaachak
    \\
    Faculty of Science and Technology of Tangier
    \\
    Abdelmalek Essaadi University
    \\
    Tangier, Morocco
}

\date{}

\begin{document}

\maketitle

\begin{abstract}
Nominal LoRA rank is a design parameter; calibrated spectral evidence is a separate inferential quantity. This article develops a finite-sample framework for inferring effective rank structure in public foundation-model adapters. The theoretical core is an exact chi-square divergence for the fixed-dimensional Gaussian rank-one reference experiment, with an unknown signal direction integrated under a rotation-invariant reference prior. The resulting series yields a computable finite-sample Le Cam bound at concrete layer sizes, an explicit remainder bound for numerical truncation, and the rectangular Baik-Ben Arous-Peche (BBP) limit. A compact-manifold Laplace expansion shows that finite-sample likelihood evidence also depends on leading spectral gaps through the factor $s_1^{|m-n|}\prod_{i\ge2}(s_1^2-s_i^2)$, motivating joint calibration of clustered singular values. Building on these results, we introduce an empirical-null workflow for PEFT LoRA adapters: factor reconstruction, Monte Carlo $p$-values, stagewise and block testing, and module-wise and corpus-level BH reporting. In an audit of 26 public adapters, 684 modules, six architecture families, and 31,770 public-checkpoint spectra rows, calibrated effective rank is typically much smaller than nominal rank and differs systematically from 95\% energy retention. A measured RoBERTa-RTE slice on $n=24$ examples illustrates the measurement path from calibrated ranks to task evaluation, without treating the slice as a utility study. The main empirical finding is that calibrated effective rank is usually far below nominal rank, and that energy retention and statistical surprise answer different questions.
\end{abstract}

\medskip
\noindent\textbf{Keywords:}
knowledge discovery, low-rank adaptation, rank inference,
empirical nulls, spiked random matrices, finite-sample calibration,
adapter auditing

\section{Introduction}
Low-rank adaptation (LoRA) replaces a dense fine-tuning update by a low-rank adapter \citep{hu2022lora}. The chosen rank is usually an engineering hyperparameter: it controls trainable parameters and memory, while the number of trained components distinguishable from null structure is an inferential question. Two adapters with the same nominal rank can have very different spectra, and two truncation rules with similar reconstruction energy can have different false-positive interpretations. Public repositories of trained adapter factors make this distinction concrete: each checkpoint stores structured matrix updates across layers and tasks, and each module-level spectrum records how fine-tuning mass is distributed. A purely descriptive spectrum is not enough, because the same singular value can have different evidential meaning under different layer sizes, residual scales, and null constructions.

Given an adapter update $\widehat\Delta_\ell$ at layer or module $\ell$, the target output is a reproducible label of the form ``component $j$ is surprising under null construction $\mathcal N_\ell$ at level $q$''. This calibrated decision complements pruning and rank-allocation methods by separating statistical evidence from downstream engineering tradeoffs.

The core reference experiment is
\begin{equation}
H_0:Y=\sigma G,\qquad
H_1:Y=\theta uv^\top+\sigma G,
\label{eq:model}
\end{equation}
where $G_{ij}\stackrel{\mathrm{iid}}{\sim}N(0,1)$ and $u\sim\unif(S^{m-1})$, $v\sim\unif(S^{n-1})$ are independent. The Haar law supplies a rotation-invariant reference prior for an unknown signal direction. When a low false-positive detector has limited power against the average unknown direction in \eqref{eq:model}, singular-value thresholds require additional calibration before being interpreted as calibrated detections.

The contributions are threefold. First, the second moment between the null and Haar mixture is computed exactly at fixed $(m,n)$, yielding a finite-sample Le Cam bound, an explicit remainder bound for the series, and the rectangular BBP limit as a high-dimensional consequence \citep{baik2005phase,benaych2012singular}; a Laplace expansion further shows that the integrated likelihood depends on leading spectral gaps. Second, we give a deployable empirical-null workflow for adapter factors: PEFT reconstruction, Monte Carlo $p$-values, stagewise calibration, module-wise and corpus-level BH reporting, and block tests for clustered components. Third, a public-checkpoint audit shows that calibrated effective rank is typically far below nominal rank and that energy retention and statistical surprise answer different questions; a small RTE measurement-path illustration records retained-rank tradeoffs under the same truncation rules.

\section{Related Work}
LoRA and related PEFT methods reduce trainable parameters by constraining updates to a low-rank subspace \citep{hu2022lora}. During training, AdaLoRA reallocates rank budgets \citep{zhang2023adalora}, DoRA separates magnitude and direction in the weight update \citep{liu2024dora}, and VB-LoRA and LoRA-Mini explore alternative parameterizations and decompositions \citep{li2024vblora,singh2024loramini}. Those approaches answer how to allocate or reparameterize rank while learning. Once an adapter is trained, however, rank allocation is no longer the question: the issue is whether observed singular components are surprising under a stated null.

Spiked random-matrix models supply detection thresholds and outlier laws for low-rank perturbations \citep{baik2005phase,benaych2012singular,elalaoui2018rectangular}, and singular-value shrinkage with optimal hard thresholds studies how spectra separate signal from noise in an asymptotic MSE sense \citep{gavish2014optimal}. That asymptotic tradition motivates spectral rules, but it does not by itself supply a finite-sample false-positive bound at concrete layer dimensions; the exact chi-square series developed below does.

Because trained adapter residuals can be anisotropic, heavy-tailed, or block-structured, deployment also requires empirical nulls. Parallel analysis compares observed principal components with null spectra \citep{horn1965rationale,buja1992remarks}, while empirical-Bayes null estimation and resampling error control separate reference models from measured null behavior \citep{efron2010large,westfall1993resampling}. The add-one Monte Carlo $p$-value used below follows that resampling tradition. Benjamini-Hochberg (BH) remains a useful reporting rule \citep{benjamini1995controlling}, but after adaptive deflation its guarantees should be described as empirically calibrated unless the needed dependence conditions are proved.

\section{Exact Rank-One Bound}
For a selector $\phi(Y)\in\{0,1\}$ let $\alpha(\phi)=\Pbb_0\{\phi=1\}$ and $\pi(\phi;\theta)=\Pbb_1\{\phi=1\}$, where $P_1$ is the Haar mixture in \eqref{eq:model}. The mixture likelihood ratio is
\begin{equation}
L(Y)=\Ebb_{u,v}\exp\left\{\frac{\theta}{\sigma^2}\ip{Y}{uv^\top}-\frac{\theta^2}{2\sigma^2}\right\}.
\label{eq:lr}
\end{equation}

\begin{theorem}[Exact chi-square divergence]
Let $(u,v)$ and $(u',v')$ be independent draws from $\unif(S^{m-1})\times\unif(S^{n-1})$ and set $\lambda=\theta^2/\sigma^2$. Then
\begin{align}
\chi^2(P_1\Vert P_0)
&=\Ebb\exp\{\lambda\ip{u}{u'}\ip{v}{v'}\}-1 \notag\\
&=\sum_{j=1}^{\infty}\frac{\lambda^{2j}}{(2j)!}M_m(2j)M_n(2j),
\label{eq:chisq}
\end{align}
where, for independent $x,x'\sim\unif(S^{d-1})$,
\[
M_d(2j)=\Ebb\ip{x}{x'}^{2j}=\frac{(2j-1)!!}{d(d+2)\cdots(d+2j-2)}.
\]
\end{theorem}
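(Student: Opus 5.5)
We will use the standard second-moment identity $\chi^2(P_1\Vert P_0)=\Ebb_{P_0}L(Y)^2-1$ together with the mixture representation \eqref{eq:lr}. Writing $L(Y)^2$ as a double expectation over two independent copies $(u,v)$ and $(u',v')$, and applying Fubini (everything is nonnegative), we get
\[
\Ebb_0 L^2=\Ebb_{u,v,u',v'}\,\Ebb_0\exp\Big\{\tfrac{\theta}{\sigma^2}\ip{Y}{uv^\top+u'v'^\top}-\tfrac{\theta^2}{\sigma^2}\Big\}.
\]
Under $P_0$ we have $\ip{Y}{A}\sim N(0,\sigma^2\norm{A}_F^2)$, so the inner Gaussian moment generating function equals $\exp\{\tfrac{\theta^2}{2\sigma^2}\norm{uv^\top+u'v'^\top}_F^2-\tfrac{\theta^2}{\sigma^2}\}$. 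Because $u,v,u',v'$ are unit vectors, $\norm{uv^\top+u'v'^\top}_F^2=2+2\ip{u}{u'}\ip{v}{v'}$. The constants then cancel and leave $\Ebb\exp\{\lambda\ip{u}{u'}\ip{v}{v'}\}$, which gives the first equality.

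For the series, we expand the exponential. The quantity $|\lambda\ip{u}{u'}\ip{v}{v'}|\le\lambda$ is bounded, so dominated convergence justifies exchanging sum and expectation. Independence of $(u,u')$ and $(v,v')$ factorizes each term as $\frac{\lambda^k}{k!}\Ebb\ip{u}{u'}^k\,\Ebb\ip{v}{v'}^k$. For odd $k$ these moments vanish by the symmetry $x'\mapsto -x'$. The $k=0$ term equals $1$ and cancels the $-1$, which yields the sum over $j\ge1$ of the terms $\frac{\lambda^{2j}}{(2j)!}M_m(2j)M_n(2j)$.

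It remains to compute $M_d(2j)$, which is the only step requiring an actual calculation. By rotation invariance, we may condition on $x'$ and replace it by $e_1$, so $M_d(2j)=\Ebb x_1^{2j}$ for $x\sim\unif(S^{d-1})$. We then write $x=g/\norm{g}$ with $g\sim N(0,I_d)$ and use the classical independence of $\norm{g}$ and $g/\norm{g}$. This gives $\Ebb g_1^{2j}=\Ebb\norm{g}^{2j}\,\Ebb x_1^{2j}$. We have $\Ebb g_1^{2j}=(2j-1)!!$, and $\Ebb\norm{g}^{2j}=\Ebb(\chi^2_d)^j=d(d+2)\cdots(d+2j-2)$. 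The latter follows from $\Ebb(\chi^2_d)^j=2^j\Gamma(d/2+j)/\Gamma(d/2)$. Dividing the two expressions gives the stated formula.

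The main point needing care is justifying the interchanges, namely Fubini for $L^2$ and termwise expectation. Both are routine here because all integrands are bounded or nonnegative.
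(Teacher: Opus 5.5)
Your proof is correct. Its first two steps (the second-moment identity via the Gaussian moment generating function and $\norm{uv^\top+u'v'^\top}_F^2=2+2\ip{u}{u'}\ip{v}{v'}$, then termwise expansion with odd moments vanishing by symmetry) are the paper's.

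You diverge only in evaluating $M_d(2j)$. The paper fixes $x'=e_1$ and writes the marginal density $c_d(1-t^2)^{(d-3)/2}$ of $T=x_1$. It then equates a ratio of Beta-type integrals with $(2j-1)!!/[d(d+2)\cdots(d+2j-2)]$. You instead write $x=g/\norm{g}$ and use the independence of $\norm{g}$ and $g/\norm{g}$ to get $\Ebb g_1^{2j}=\Ebb\norm{g}^{2j}\,\Ebb x_1^{2j}$, then read off the Gaussian and chi-square moments.

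Your route needs neither the coordinate density nor the Beta/Gamma simplification, which the paper leaves implicit (the integral ratio equals $\Gamma(j+\tfrac12)\Gamma(\tfrac d2)/[\Gamma(\tfrac12)\Gamma(\tfrac d2+j)]$). It also covers $d=1$, where the density formula breaks down. The paper's route gives the full law of $\ip{x}{x'}$ rather than only its even moments. That is convenient if one wants an integral representation of $\Ebb\exp\{\lambda\ip{u}{u'}\ip{v}{v'}\}$ instead of the series.
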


\begin{proof}
Write $A=uv^\top$ and $A'=u'v'^\top$. Under $P_0$, conditional on $A,A'$, the Gaussian moment-generating function gives
\[
\Ebb_0 L(Y)^2=\Ebb_{A,A'}\exp\left\{-\theta^2/\sigma^2+\frac{\theta^2}{2\sigma^2}\norm{A+A'}_F^2\right\}.
\]
Because $\norm{A+A'}_F^2=2+2\ip{u}{u'}\ip{v}{v'}$, the first identity follows. Expanding the exponential is justified by boundedness. Odd terms vanish by symmetry, and independence separates the even moments. Rotational invariance fixes $x'=e_1$. The first coordinate $T=x_1$ of $x\sim\unif(S^{d-1})$ then has density
\[
f_T(t)=c_d(1-t^2)^{(d-3)/2},\qquad t\in[-1,1],
\]
with normalizing constant $c_d$ so that $\int_{-1}^1 f_T=1$. Even moments satisfy
\[
\Ebb T^{2j}=\frac{\int_0^1 t^{2j}(1-t^2)^{(d-3)/2}\,dt}{\int_0^1(1-t^2)^{(d-3)/2}\,dt}
=\frac{(2j-1)!!}{d(d+2)\cdots(d+2j-2)},
\]
which is the stated formula for $M_d(2j)$.
\end{proof}

The series is entire at fixed $m,n$ and admits an explicit remainder bound. Let $a_j$ be the $j$th term in \eqref{eq:chisq}, $S_K=\sum_{j\le K}a_j$, and $R_K=\chi^2-S_K$. For
$q_j=\lambda^2/[(m+2j)(n+2j)]$, if $q_{K+1}<1$ then
\begin{equation}
0\le R_K\le \frac{a_{K+1}}{1-q_{K+1}}.
\label{eq:tail}
\end{equation}
Indeed, $a_{j+1}/a_j=\lambda^2(2j+1)/[(2j+2)(m+2j)(n+2j)]\le q_j$, and $q_j$ decreases. The crude global bound $R_K\le e^{|\lambda|}\Pbb\{\Pois(|\lambda|)\ge 2K+2\}$ is also valid since $M_m(2j)M_n(2j)\le1$.

\begin{corollary}[BBP-scale limit]
If $m,n\to\infty$ and $\lambda^2/(mn)\to\beta\in[0,1)$, then
\[
\chi^2(P_1\Vert P_0)\to (1-\beta)^{-1/2}-1.
\]
Equivalently, if $\theta/[\sigma(mn)^{1/4}]\to c<1$, the limit is $(1-c^4)^{-1/2}-1$.
\end{corollary}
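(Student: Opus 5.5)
The plan is to pass to the limit term by term in the series \eqref{eq:chisq} of the exact chi-square theorem, and to justify the interchange of limit and sum by dominated convergence. The target is the binomial series
\[
(1-\beta)^{-1/2}-1=\sum_{j\ge1}\binom{2j}{j}\frac{\beta^j}{4^j}=\sum_{j\ge1}\frac{(2j-1)!!^2}{(2j)!}\,\beta^j ,
\]
where I use $(2j)!=(2j)!!\,(2j-1)!!$ and $(2j)!!=2^j j!$. It therefore suffices to show that each term $a_j=\frac{\lambda^{2j}}{(2j)!}M_m(2j)M_n(2j)$ converges to $\frac{(2j-1)!!^2}{(2j)!}\beta^j$, and that the $a_j$ are dominated by a summable sequence uniformly in $(m,n)$ along the limit.

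\textbf{Pointwise convergence.} By the moment formula,
\[
a_j=\frac{(2j-1)!!^2}{(2j)!}\,\lambda^{2j}\prod_{k=0}^{j-1}\frac{1}{(m+2k)(n+2k)}.
\]
For fixed $j$, the product times $(mn)^j$ tends to $1$ as $m,n\to\infty$, because each factor is $(1+2k/m)^{-1}(1+2k/n)^{-1}\to1$. Since $\lambda^{2j}/(mn)^j\to\beta^j$, this gives $a_j\to \frac{(2j-1)!!^2}{(2j)!}\beta^j$.

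\textbf{Domination.} Since $(m+2k)(n+2k)\ge mn$, we have
\[
a_j\le \frac{(2j-1)!!^2}{(2j)!}\Bigl(\frac{\lambda^2}{mn}\Bigr)^j .
\]
Fix $\beta'\in(\beta,1)$. Eventually $\lambda^2/(mn)\le\beta'$, so $a_j\le \frac{(2j-1)!!^2}{(2j)!}\beta'^j$, and this bound is summable with sum $(1-\beta')^{-1/2}-1<\infty$. Dominated convergence for series (Tannery's theorem) then yields the limit. The case $\beta=0$ is covered by the same argument. This domination step is the only point that needs care, and it is also where the hypothesis $\beta<1$ is used.

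\textbf{Equivalent form.} With $\lambda=\theta^2/\sigma^2$ we get $\lambda^2/(mn)=\bigl(\theta/[\sigma(mn)^{1/4}]\bigr)^4$. So $\theta/[\sigma(mn)^{1/4}]\to c$ gives $\beta=c^4$, and $c<1$ is equivalent to $\beta<1$; implicitly $c\ge0$, or one works with $|c|$. Substituting $\beta=c^4$ gives the stated limit $(1-c^4)^{-1/2}-1$.
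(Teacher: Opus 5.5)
Your proposal is correct and follows the paper's argument: termwise convergence of $a_j$ to $\binom{2j}{j}(\beta/4)^j$, a summable domination that holds uniformly along the limit so the limit passes through the sum, and the central-binomial generating function. The only minor difference is the dominating sequence: you use $(m+2k)(n+2k)\ge mn$ to dominate by the limiting series at some $\beta'\in(\beta,1)$, whereas the paper uses the geometric ratio bound $a_{j+1}/a_j\le q_j\le\lambda^2/(mn)$ from its tail estimate (and your remark that the equivalent form needs $c\ge0$, or $|c|<1$, is a fair clarification).
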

For fixed $j$, $M_m(2j)M_n(2j)=((2j-1)!!)^2(mn)^{-j}\{1+o(1)\}$, so the $j$th term converges to $\binom{2j}{j}(\beta/4)^j$. A geometric domination from \eqref{eq:tail} justifies summing, and the central-binomial generating function gives the limit.

The finite-sample testing consequence is
\begin{equation}
\pi(\phi;\theta)\le \alpha(\phi)+\frac12\sqrt{\chi^2(P_1\Vert P_0)}.
\label{eq:lecam}
\end{equation}
Indeed,
\[
\Ebb_1\phi-\Ebb_0\phi=\Ebb_0(L-1)\phi\le \TV(P_0,P_1)
\le \tfrac12\Ebb_0|L-1|
\le \tfrac12\sqrt{\Ebb_0(L-1)^2}
=\tfrac12\sqrt{\chi^2(P_1\Vert P_0)},
\]
where the last step is Cauchy-Schwarz on $L-1$ \citep{lecam1986asymptotic,tsybakov2009nonparametric}. Thus \eqref{eq:lecam} translates the divergence calculation into a bound on attainable power at a chosen false-positive rate for the reference experiment.

\begin{figure*}[t]
\centering
\begin{minipage}{.49\textwidth}
\centering
\includegraphics[width=\linewidth]{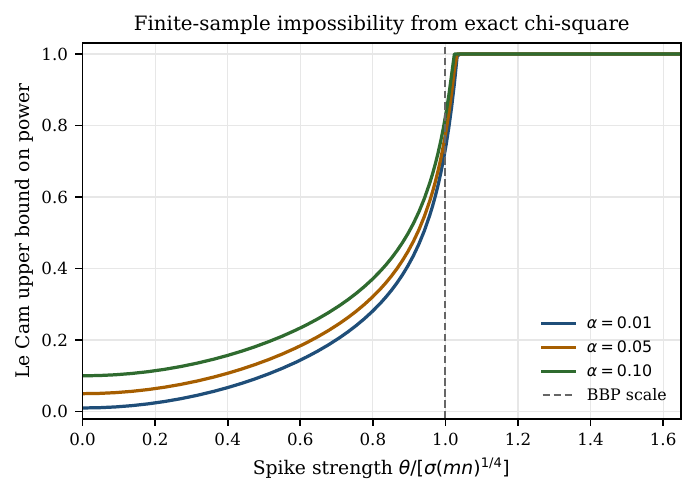}
\end{minipage}\hfill
\begin{minipage}{.49\textwidth}
\centering
\includegraphics[width=\linewidth]{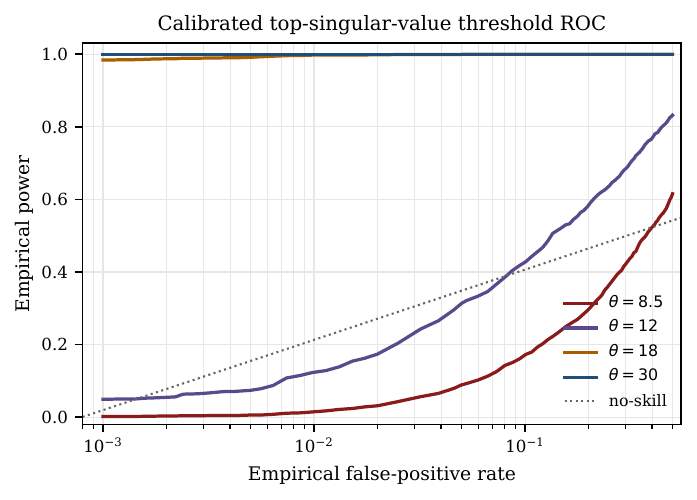}
\end{minipage}
\caption{Left: Le Cam power upper bounds from the exact chi-square series at $m=n=128$. Right: calibrated top-singular-value ROC curves from independent null and alternative simulations.}
\label{fig:calibration}
\end{figure*}

\section{Full-Spectrum Evidence and Higher-Rank Signals}
The unnormalized null edge is approximately $\sigma(\sqrt m+\sqrt n)$, while the additive rectangular outlier scale is $\sigma(mn)^{1/4}$ \citep{benaych2012singular}. For $m=n=d$, define $\widetilde Y=Y/(\sigma\sqrt d)$ and suppose $\theta/(\sigma\sqrt d)\to c$. The rank-one rectangular outlier theorem gives
\begin{equation}
s_1(\widetilde Y)\to \begin{cases}2, &0\le c\le1,\\ c+c^{-1}, &c>1,
\end{cases}
\label{eq:bbp}
\end{equation}
in probability. Thus a fixed asymptotic edge is a scale, not a finite-sample false-positive rate. A calibrated spectral rule should use the null quantile
\[
q_{1-\alpha}^{(0)}=\inf\{t:\Pbb_0(s_1(Y)\le t)\ge1-\alpha\},
\quad \phi_\alpha(Y)=\mathbf1\{s_1(Y)>q_{1-\alpha}^{(0)}\}.
\]

The likelihood ratio \eqref{eq:lr} is not a function of $s_1$ alone. Let
\[
I_\kappa(Y)=\int_{S^{m-1}}\int_{S^{n-1}} e^{\kappa u^\top Yv}
\,d\bar\omega_m(u)d\bar\omega_n(v),
\]
where $d\bar\omega_d$ is normalized surface measure. If $s_1>s_2$ and $r=\min(m,n)$, the Morse-Laplace theorem on compact manifolds \citep[Ch.~VIII]{wong2001asymptotic} gives
\begin{align}
\log I_\kappa(Y)
&=\kappa s_1-\frac{m+n-2}{2}\log\kappa+C(Y)+o(1),\notag\\
C(Y)&=\log2+\frac{m+n-2}{2}\log(2\pi)-\log(\omega_m\omega_n)\notag\\
&\quad-\frac12\log\left[s_1^{|m-n|}\prod_{i=2}^{r}(s_1^2-s_i^2)\right].
\label{eq:laplace}
\end{align}
The two maximizers are $(u_1,v_1)$ and $(-u_1,-v_1)$. In tangent coordinates $(x,y)$ around $(u_1,v_1)$, write $u=u_1\sqrt{1-\norm{x}^2}+Ux$ and $v=v_1\sqrt{1-\norm{y}^2}+Vy$ with orthonormal frames $U,V$ completing $u_1,v_1$. Expanding to quadratic order yields
\[
u^\top Yv=s_1-\frac{s_1}{2}(\norm{x}^2+\norm{y}^2)+\sum_{i=2}^{r}s_i x_i y_i+O(\norm{(x,y)}^3).
\]
The Hessian of $-u^\top Yv$ at the maximizer therefore has determinant
\[
\det(-H)=s_1^{|m-n|}\prod_{i=2}^{r}(s_1^2-s_i^2),
\]
which is exactly the gap factor in \eqref{eq:laplace}. Hence $I_\kappa$ and $s_1$ agree only at leading exponential order. Two matrices with the same $s_1$ can receive different integrated-likelihood evidence when their leading gaps differ. Near $s_1=s_2$, the nondegenerate expansion fails and a block statistic is more appropriate.

\begin{figure*}[t]
\centering
\begin{minipage}{.49\textwidth}
\centering
\includegraphics[width=\linewidth]{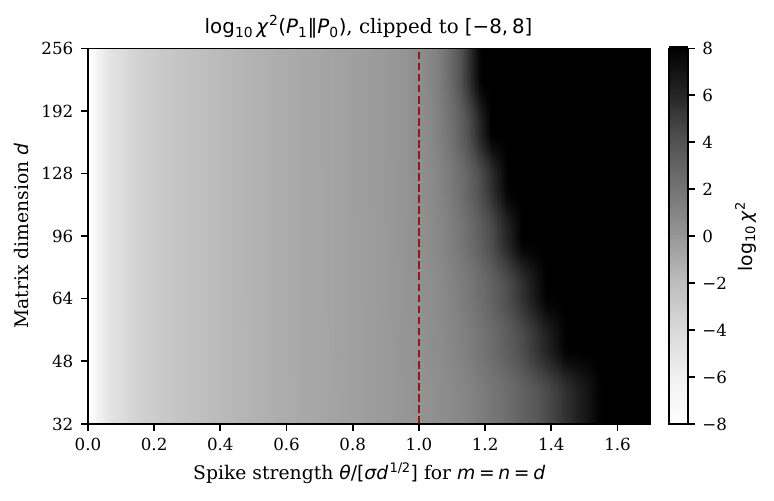}
\end{minipage}\hfill
\begin{minipage}{.49\textwidth}
\centering
\includegraphics[width=\linewidth]{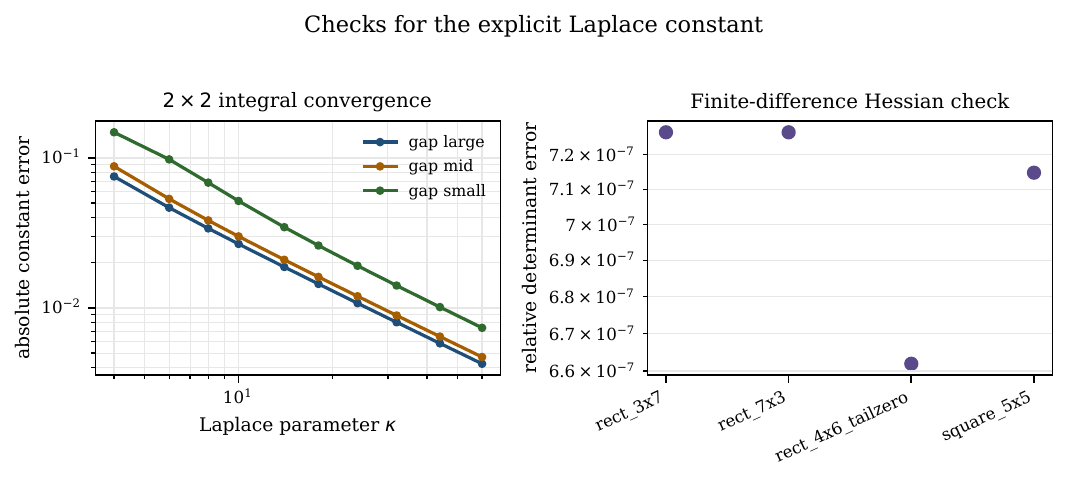}
\end{minipage}
\caption{Left: exact finite-sample $\log_{10}\chi^2(P_1\Vert P_0)$ across square dimensions, clipped to $[-8,8]$. Right: numerical checks of the Laplace constant and Hessian determinant in \eqref{eq:laplace}.}
\label{fig:chisq-laplace}
\end{figure*}

For a rank-$r$ signal $X_\star=U\Theta V^\top$ with independent Haar frames and $\Theta=\diag(\theta_1,\ldots,\theta_r)$,
\[
\chi^2_r+1=\Ebb\exp\{\sigma^{-2}\tr(\Theta U^\top U'\Theta V'^\top V)\}.
\]
If $Z=\tr(\Theta U^\top U'\Theta V'^\top V)$, then $\Ebb Z=0$, odd moments vanish by sign symmetry, and
\begin{equation}
\Ebb Z^2=\frac{\norm{\Theta}_F^4}{mn},\qquad
\chi^2_r=\frac{\norm{\Theta}_F^4}{2\sigma^4mn}+\mathcal R_4,
\label{eq:rankr}
\end{equation}
with $|\mathcal R_4|\le\sum_{k\ge4}(\norm{\Theta}_F^2/\sigma^2)^k/k!$. Expanding $\norm{\Theta}_F^4=\sum_i\theta_i^4+2\sum_{i<j}\theta_i^2\theta_j^2$ shows that the leading second-moment contribution includes the cross term
\[
\frac{2\theta_i^2\theta_j^2}{2\sigma^4mn}=\frac{\theta_i^2\theta_j^2}{\sigma^4mn}
\]
for each pair $(i,j)$. Componentwise BH tests each singular value marginally and therefore does not test that coupling; clustered near-critical components should be tested jointly (Section~\ref{sec:block}) or handled by a global rank budget.

\section{Empirical-Null Algorithms for Adapters}
The reference calculation becomes useful only after specifying a null. For trained adapters, the null can come from multiple fine-tuning seeds, residual-tail fitting, sign flips, factor/channel shuffles, permutations, or task-preserving bootstraps. The null construction must be reported because it is part of the scientific claim.

\subsection{PEFT Reconstruction}
Public PEFT-style LoRA checkpoints usually store adapter factors, from which dense updates can be reconstructed. For a module with input dimension $n$, output dimension $m$, and nominal rank $r$, the common convention is
\[
W_A=\texttt{lora\_A.weight}\in\R^{r\times n},\qquad
W_B=\texttt{lora\_B.weight}\in\R^{m\times r}.
\]
The dense adapter update is reconstructed as
\begin{equation}
\Delta W=\gamma\, W_B W_A,\qquad
\gamma=\alpha/r,
\label{eq:peft}
\end{equation}
or $\gamma=\alpha/\sqrt r$ when rank-stabilized LoRA is explicitly enabled. Consequently $\rank(\Delta W)\le r$ up to numerical tolerance. The statistical question is which trained singular components are surprising under a stated empirical null.

\subsection{Stagewise Testing}
The following procedure is the deployable version of the calibration layer. It is phrased as a reporting algorithm whose outputs include retained ranks, diagnostics, and abstentions when the chosen null construction falls outside calibration criteria.

\paragraph{Algorithm 1: Stagewise empirical-null testing.}
\begin{enumerate}[leftmargin=*,itemsep=2pt]
\item Inputs: adapter checkpoint or extracted factors, null constructor $\mathcal N$, target level $q$, Monte Carlo count $B$, random seed, and module list.
\item For each module $\ell$, reconstruct $\widehat\Delta_\ell$ using \eqref{eq:peft}; compute singular values $s_{\ell,1}\ge\cdots\ge s_{\ell,r_\ell}$.
\item Generate calibration null samples independently of any evaluation samples. For component $j$, compute the null statistic $T_{\ell,j}^{(b)}$ after applying the same deflation or block operation used on the observed adapter.
\item Use the add-one Monte Carlo value
\[
p_{\ell,j}=\frac{1+\sum_{b=1}^B \mathbf1\{T_{\ell,j}^{(b)}\ge T_{\ell,j}^{\rm obs}\}}{B+1}.
\]
\item Apply BH to the family of reported $(\ell,j)$ tests, or report fixed-level decisions. Unless dependence conditions are proved for the chosen pipeline, describe the BH output as empirically calibrated.
\item Output retained masks, effective ranks, $p$-values, uncertainty intervals, null diagnostics, and parameter-savings summaries.
\end{enumerate}

After data-dependent deflation, the residual is conditioned on the extracted subspace. The calibration target is therefore the full pipeline: draw null adapters, run deflation, compute all $p$-values, apply the same selection rule, and report realized false-positive and discovery summaries with Wilson or bootstrap intervals.

\subsection{Block Tests for Clustered Components}
\label{sec:block}
When leading gaps are small, the Laplace nondegeneracy condition $s_1>s_2$ fails and componentwise tests can overstate the distinction between adjacent singular directions. Algorithm~2 is the operational counterpart of that failure mode: it changes the unit of inference whenever relative gaps fall below a threshold. The public audit below reports componentwise BH; the block rule is the fallback when $g_j<\delta$.

\paragraph{Algorithm 2: Gap-based block test.}
\begin{enumerate}[leftmargin=*,itemsep=2pt]
\item Form ordered singular values $s_1,\ldots,s_r$ and define relative gaps $g_j=(s_j-s_{j+1})/\max(s_j,\varepsilon)$.
\item Starting at $j=1$, group adjacent components into a block whenever $g_j<\delta$; the default reporting threshold is $\delta=0.10$ and should be sensitivity-checked.
\item For a block $B=\{a,\ldots,b\}$, use a joint statistic such as $\sum_{j=a}^b s_j^2$ or a likelihood-inspired spectral-gap score.
\item Calibrate the block statistic using the same null constructor and the same deflation history used for the observed adapter.
\item Report block-level $p$-values and retained block masks. If block and componentwise decisions disagree, report both and treat the block result as the conservative block-level decision.
\end{enumerate}

\section{Synthetic Validation}
The numerical implementation uses log-domain summation of \eqref{eq:chisq} and selects $K$ until the dimension-aware bound \eqref{eq:tail} is below the requested tolerance. Null quantiles and powers below use independent random streams, exact dense singular values, and Wilson intervals for reported false-positive rates. Table~\ref{tab:ops} gives a compact synthetic calibration experiment for $m=n=128$, $\sigma=1$, and spike strengths $(30,18,12,8.5)$ using 5000 independent null draws and 2000 independent alternative draws per spike. Entries are empirical false-positive rate/power; the calibrated $s_1$ threshold targets $\alpha=0.05$, and Wilson intervals for the asymptotic edge false-positive rate give $0.129$ with interval $[0.120,0.139]$.

\begin{table}[t]
\centering
\caption{Synthetic operating points, shown as empirical FPR/power.}
\label{tab:ops}
\arxivtablesetup
\begin{adjustbox}{max width=\linewidth}
\begin{tabular}{@{}c*{5}{c}@{}}
\toprule
$\theta$ & $s_1$ cal. & edge & edge$+2\sigma$ & energy cal. & fixed $\tau=25$ \\
\midrule
8.5 & .050/.089 & .129/.215 & .000/.000 & .050/.061 & .000/.000 \\
12  & .050/.314 & .129/.492 & .000/.000 & .050/.233 & .000/.000 \\
18  & .050/1.00 & .129/1.00 & .000/.716 & .050/.999 & .000/.531 \\
30  & .050/1.00 & .129/1.00 & .000/1.00 & .050/1.00 & .000/1.00 \\
\bottomrule
\end{tabular}
\end{adjustbox}
\end{table}

A second diagnostic applies the simulated nulls sequentially to a rank-four signal. Table~\ref{tab:component} reports the nominal residual dimensions $(m-j+1,n-j+1)$ used at each stage and BH retention at $q=0.10$. The retention frequencies characterize the stated sequential-null simulation. The deployment analogue is an empirical null drawn from held-out seeds, task-preserving randomizations, or another declared construction at each stage.

\begin{table}[t]
\centering
\caption{Componentwise rank-four diagnostic; BH retention at $q=0.10$.}
\label{tab:component}
\arxivtablesetup
\begin{adjustbox}{max width=\linewidth}
\begin{tabular}{@{}crrrr@{}}
\toprule
$j$ & $\theta_j$ & residual size & median $p_j$ & retention \\
\midrule
1 & 30.0 & $128\times128$ & .0014 & 1.000 \\
2 & 18.0 & $127\times127$ & .0014 & 1.000 \\
3 & 12.0 & $126\times126$ & .164  & .351 \\
4 & 8.5  & $125\times125$ & .758  & .020 \\
\midrule
\multicolumn{4}{@{}l}{Mean retained rank} & 2.37 \\
\bottomrule
\end{tabular}
\end{adjustbox}
\end{table}

\begin{figure*}[t]
\centering
\includegraphics[width=.88\textwidth]{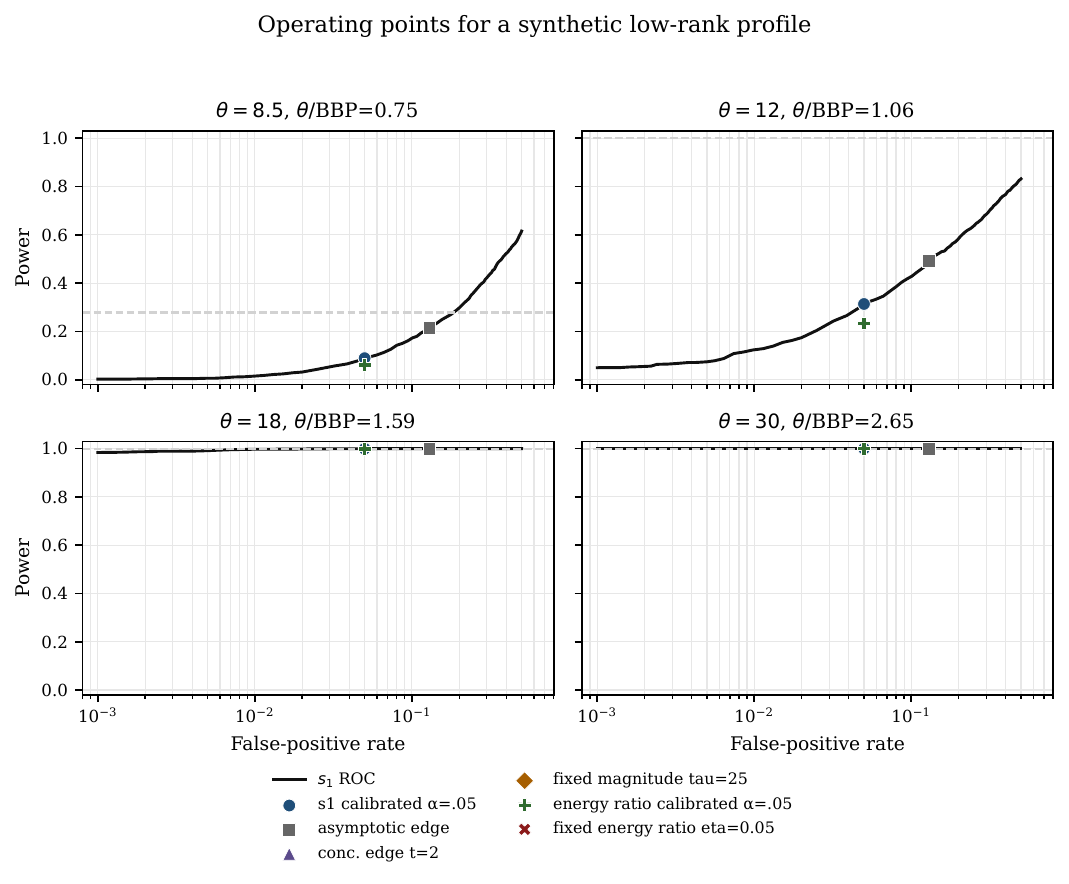}
\caption{Operating points for the same synthetic profile. The panels emphasize that calibration changes the reported false-positive rate, while strong spikes remain detectable.}
\label{fig:adapterdiag}
\end{figure*}

The synthetic results isolate three effects the reference model is built to expose: the asymptotic edge runs at a finite-sample level of $0.129$ compared with the nominal $0.05$, near-critical spikes have calibrated power well below one, and strong spikes are detected almost surely. Establishing analogous effects on trained adapters requires the empirical-null program in Section~\ref{sec:realadapter}.

\IfFileExists{data/deflation_calibration_summary.tex}{%
\begin{table}[t]
\centering
\caption{Null-only pipeline calibration. The iid Gaussian row checks the reference pipeline; the other rows are misspecification stress tests that motivate empirical nulls for deployment.}
\label{tab:deflation-calibration}
\arxivtablesetup
\begin{adjustbox}{max width=\linewidth}
\begin{tabular}{@{}lrrr@{}}
\toprule
Null used for evaluation & any retained & mean retained & KS dev. \\
\midrule
iid Gaussian & 0.094 & 0.163 & 0.030 \\
anisotropic rows & 0.982 & 3.621 & 0.842 \\
Student-$t_5$ & 0.437 & 1.127 & 0.366 \\
block-correlated & 0.904 & 0.904 & 0.886 \\
\bottomrule
\end{tabular}

\end{adjustbox}
\end{table}
}{}

\section{Public Adapter Checkpoints}
\label{sec:realadapter}
The empirical study uses a weights-only public-checkpoint corpus. Public PEFT checkpoints contain the LoRA factors needed for \eqref{eq:peft}, so the spectral audit can be performed directly from adapter weights. The implementation reconstructs spectra from pinned repositories and revisions and archives derived spectra, metadata, and checksums; third-party weights remain with their original sources.

The weights-only corpus spans RoBERTa, RoBERTa-large, XLM-RoBERTa,
FLAN-T5, Whisper, and LLaMA adapters (Table~\ref{tab:public-adapter-summary}). It includes a RoBERTa-large MRPC rank
sweep with nominal ranks $1,8,16,64$. For each module, the implementation compares
a primary factor-channel shuffle null with factor-sign and matched-Gaussian
sensitivity nulls. The primary null uses 999 samples per module, so the
smallest Monte Carlo $p$-value is $1/(B+1)=0.001$ and module-wise BH at $q=0.10$ is not
blocked by p-value resolution for ranks up to 64.

The resulting real-data pattern is consistent across families: most trained
modules have at least one empirically surprising component, but the calibrated
effective rank is far below the nominal rank. In the current corpus, module-wise
BH retains a median of one component per module, while a 95\% energy rule retains
a median of three components. From Table~\ref{tab:public-adapter-summary}, the MRPC
rank sweep is explicit: the $r=1$ adapter retains median module-wise BH rank $0$;
the $r=8$ and $r=16$ variants retain $1$; and the $r=64$ variant retains $2.5$.
Increasing nominal rank therefore has a nonlinear relationship with calibrated
effective rank. Whisper-large-v3-turbo (nominal rank $16$) retains median calibrated
rank $1$ under the primary null, illustrating that reconstruction energy and
statistical surprise answer different questions: energy retention typically
keeps many singular values on this family, while the calibrated decision remains sparse.
These results establish a corpus-level spectral finding from public adapter weights.
The broader corpus evidence is spectral; task metrics appear only in the RTE
measurement-path illustration below.

The measurement-path illustration uses the public RoBERTa RTE LoRA adapter on the
first $n=24$ GLUE-RTE validation examples under CPU inference, after verifying that
the saved adapter classifier head is loaded from the checkpoint. It compares the
full adapter, energy-95 rank truncation, corpus-level calibrated BH, and
module-wise calibrated BH after replacing LoRA factors by best rank-$k$
approximations of the dense adapter update.
Table~\ref{tab:downstream} reports the measured counts. On this slice the full
adapter and module-wise BH both achieve $18/24$ accuracy (Wilson $95\%$ interval
$[0.55,0.89]$), energy-95 achieves $16/24$ ($[0.47,0.82]$), and corpus-level BH
achieves $17/24$ ($[0.51,0.85]$). Module-wise BH keeps about $13\%$ of the LoRA
rank budget and matches the full-adapter count, while energy-95 keeps about $26\%$
and differs by two examples. With $n=24$, a two-example gap is not a significance
claim; the slice is a reproducible measurement path, not a utility study.

\IfFileExists{data/downstream_evaluation_summary.tex}{%
\begin{table}[t]
\centering
\caption{Measurement-path illustration on the public RoBERTa RTE LoRA adapter. Accuracy is computed on the first $n=24$ GLUE-RTE validation examples under CPU inference after verifying the saved adapter classifier head.}
\label{tab:downstream}
\arxivtablesetup
\begin{adjustbox}{max width=\linewidth}
\begin{tabular}{@{}lrrr@{}}
\toprule
Rank rule & accuracy & rank ratio & $n$ \\
\midrule
full & 0.750 & 1.000 & 24 \\
energy 95\% & 0.667 & 0.255 & 24 \\
calibrated BH & 0.708 & 0.120 & 24 \\
module BH & 0.750 & 0.130 & 24 \\
\bottomrule
\end{tabular}

\end{adjustbox}
\end{table}
}{}

\IfFileExists{data/real_adapter_summary.tex}{%
\begin{table*}[t]
\centering
\caption{Public-adapter summaries. Rows are real weights-only spectra from pinned PEFT adapters; retained rank is the median module-wise BH rank under the primary empirical null.}
\label{tab:public-adapter-summary}
\resizebox{\textwidth}{!}{\begin{tabular}{@{}llrrrl@{}}
\toprule
Architecture/task & source & modules & rank & med. module-BH rank & min $p$ \\
\midrule
FLAN-T5-large / GLUE CoLA & public checkpoint & 30 & 16 & 1.0 & 0.001 \\
FLAN-T5-large / GLUE MNLI & public checkpoint & 30 & 16 & 1.0 & 0.001 \\
FLAN-T5-large / GLUE MRPC & public checkpoint & 30 & 16 & 1.0 & 0.001 \\
FLAN-T5-large / GLUE QNLI & public checkpoint & 30 & 16 & 1.0 & 0.001 \\
FLAN-T5-large / GLUE QQP & public checkpoint & 30 & 16 & 1.0 & 0.001 \\
FLAN-T5-large / GLUE SST-2 & public checkpoint & 30 & 16 & 1.0 & 0.001 \\
FLAN-T5-large / GLUE STS-B & public checkpoint & 30 & 16 & 1.0 & 0.001 \\
FLAN-T5-large / GLUE WNLI & public checkpoint & 30 & 16 & 1.0 & 0.001 \\
FLAN-T5-large / SuperGLUE RTE & public checkpoint & 30 & 16 & 1.0 & 0.001 \\
FLAN-T5-large / SuperGLUE WiC & public checkpoint & 30 & 16 & 1.0 & 0.001 \\
LLaMA-7B / Alpaca instruction tuning & public checkpoint & 30 & 16 & 3.0 & 0.001 \\
RoBERTa-base / MNLI & public checkpoint & 24 & 8 & 1.0 & 0.001 \\
RoBERTa-base / MNLI partial-layer LoRA & public checkpoint & 12 & 8 & 2.0 & 0.001 \\
RoBERTa-base / RTE & public checkpoint & 24 & 8 & 1.0 & 0.001 \\
RoBERTa-base / RTE partial-layer LoRA & public checkpoint & 12 & 8 & 1.0 & 0.001 \\
RoBERTa-base / comma placement & public checkpoint & 24 & 16 & 4.0 & 0.001 \\
RoBERTa-base / comma placement finetuned & public checkpoint & 24 & 16 & 2.5 & 0.001 \\
RoBERTa-base / text classification & public checkpoint & 12 & 4 & 0.0 & 0.026 \\
RoBERTa-large / GLUE MRPC rank sweep r=1 & public checkpoint & 30 & 1 & 0.0 & 1 \\
RoBERTa-large / GLUE MRPC rank sweep r=16 & public checkpoint & 30 & 16 & 1.0 & 0.001 \\
RoBERTa-large / GLUE MRPC rank sweep r=64 & public checkpoint & 30 & 64 & 2.5 & 0.001 \\
RoBERTa-large / GLUE MRPC rank sweep r=8 & public checkpoint & 30 & 8 & 1.0 & 0.001 \\
Whisper-large-v3-turbo / disfluency ASR & public checkpoint & 30 & 16 & 1.0 & 0.001 \\
XLM-RoBERTa-base / Amharic news classification & public checkpoint & 24 & 8 & 1.0 & 0.001 \\
XLM-RoBERTa-base / LeNER-Br named entity recognition & public checkpoint & 24 & 16 & 2.0 & 0.001 \\
XLM-RoBERTa-base / language identification & public checkpoint & 24 & 8 & 1.0 & 0.001 \\
\bottomrule
\end{tabular}
}
\end{table*}
}{}

\IfFileExists{figures/fig7_adapter_schema_spectra.pdf}{%
\begin{figure*}[t]
\centering
\includegraphics[width=.92\textwidth]{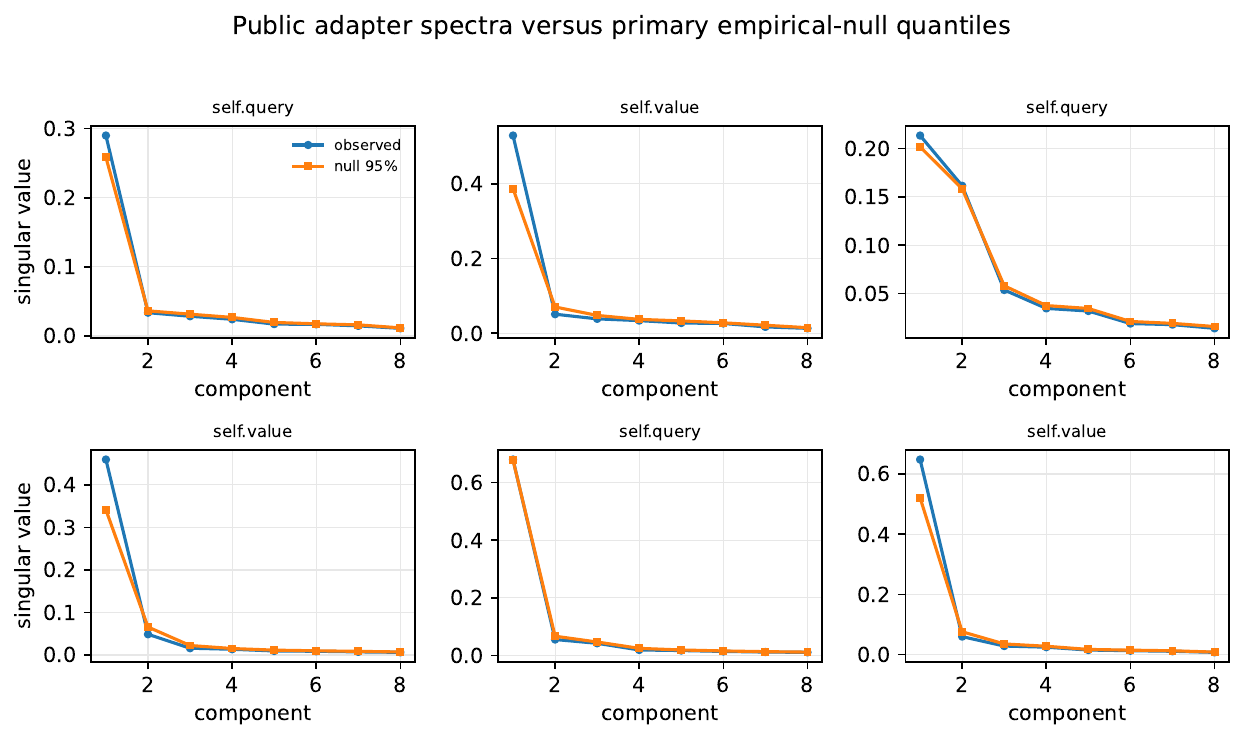}
\caption{Public adapter spectra versus primary empirical-null quantiles for selected module-level LoRA updates. The figure supports spectral calibration; task metrics are reported separately in Table~\ref{tab:downstream}.}
\label{fig:adapter-schema-spectra}
\end{figure*}
}{}

\section{Limitations}
The Gaussian/Haar experiment is a reference model for finite-sample calibration, not a claim that trained adapter residuals are isotropic Gaussians. Real adapter spectra may have anisotropic residuals, heavy tails, row/column variance heterogeneity, tensor structure, or optimizer-induced correlations. Equation~\eqref{eq:lecam} bounds attainable power for the stated rank-one reference experiment; deployment extends the bound through empirical nulls that reproduce the full deflation and block-testing pipeline. After adaptive deflation, BH output is an empirically calibrated reporting rule unless dependence conditions are proved. With $B=999$ Monte Carlo draws, the finest attainable $p$-value is $1/(B+1)=0.001$. Inconclusive residual diagnostics should be reported as abstentions rather than forced decisions. Calibrated effective rank is an input to engineering evaluation, not a substitute for task metrics, latency, or memory measurements on larger benchmarks.

\section{Reproducibility Statement}
The artifact regenerates synthetic tables, writes public-checkpoint adapter spectra from pinned repositories and revisions, checks numerical rank $\rank(\Delta W)\le r$, validates $p$-values in $[0,1]$, confirms retained counts do not exceed nominal rank, runs null-only pipeline calibration, and appends the measured RTE evaluation rows. Derived spectra, metadata, and checksums are archived; third-party weights remain with their original sources.

\section{Conclusion}
Adapter nominal rank can be separated from calibrated effective rank. The exact chi-square series provides a finite-sample Le Cam bound for the rank-one reference experiment, with an explicit remainder bound for numerical truncation, while the Laplace expansion explains why full-spectrum information can matter beyond the top singular value. For real adapters, empirical nulls extend the reference calculation to the full deflation and block-testing pipeline. The public-checkpoint study shows that calibrated effective rank is usually much smaller than nominal rank (for example, MRPC median BH ranks $0$, $1$, $1$, and $2.5$ at nominal ranks $1$, $8$, $16$, and $64$) and that energy retention and statistical surprise can disagree sharply. Those are the empirical findings that the finite-sample bound and the empirical-null workflow are designed to make precise.

Taken together, the findings align with spiked-matrix detection and outlier
theory \citep{benaych2012singular,elalaoui2018rectangular}, while the
full-spectrum and spectral-gap effects follow from the likelihood analysis
above. The misspecification and adaptive-deflation results motivate structured
empirical nulls, full-pipeline resampling, and multiplicity control
\citep{efron2010large,westfall1993resampling,benjamini1995controlling};
training dynamics and task-specific sensitivity remain possible explanations
for rank differences, not mechanisms established here.

\bibliographystyle{plainnat}
\bibliography{references}

\end{document}